\newtheorem{lemma}{Lemma} 
\newtheorem{Theorem}{Theorem} 
\newtheorem{thm}{Theorem}
\newtheorem{prop}[thm]{Proposition}
\newtheorem{rem}{Remark}
\title{\LARGE \bf
Safety-Critical Control with Uncertainty Quantification using \\Adaptive Conformal Prediction
}
\author{Hao Zhou, Yanze Zhang, and Wenhao Luo
\thanks{$^*$This work was supported in part by the Faculty Research Grant award at UNC Charlotte, and in part by the U.S. National Science Foundation under Grant CNS-2312465.}
\thanks{Authors are with the Department of Computer Science, University of North Carolina at Charlotte, Charlotte, NC 28223, USA.
Email: {\tt\small \{hzhou11, yzhang94, wenhao.luo\}@uncc.edu}
}
}
\begin{document}

\maketitle
\thispagestyle{empty}
\pagestyle{empty}

\begin{abstract}
Safety assurance is critical in the planning and control of robotic systems. For robots operating in the real world, the safety-critical design often needs to explicitly address uncertainties and the pre-computed guarantees often rely on the assumption of the particular distribution of the uncertainty. However, it is difficult to characterize the actual uncertainty distribution beforehand and thus the established safety guarantee may be violated due to possible distribution mismatch. In this paper, we propose a novel safe control framework that provides a high-probability safety guarantee for stochastic dynamical systems following unknown distributions of motion noise. Specifically, this framework adopts adaptive conformal prediction to dynamically quantify the prediction uncertainty from online observations and combines that with the probabilistic extension of the control barrier functions (CBFs) to characterize the uncertainty-aware control constraints. By integrating the constraints in the model predictive control scheme, it allows robots to adaptively capture the true prediction uncertainty online in a distribution-free setting and enjoys formally provable high-probability safety assurance. Simulation results on multi-robot systems with stochastic single-integrator dynamics and unicycle dynamics are provided to demonstrate the effectiveness of our framework.

\end{abstract}

\section{Introduction}\label{sec: introduction}
Safety-critical control plays a key role in many robotic applications, such as multi-robot navigation \cite{wang2017safety, lyu2023decentralized} and autonomous driving \cite{lyu2021probabilistic}.
Model-based approaches such as the 
control barrier functions (CBFs) \cite{ames2019control, ames2016control} have been widely applied to enforce formally provable safety for deterministic dynamical systems.
However, realistic factors such as uncertainty, non-determinism, and lack of complete information 
often make it challenging to provide safety assurance for those stochastic dynamical systems in the real world.

Existing works on uncertainty-aware safety-critical control primarily focus on incorporating assumptions regarding specific uncertainty distributions or adopting conservative bounds. Examples include the Gaussian representation \cite{sadigh2016safe, zhu2019chance}, the bounded uniform distribution \cite{luo2020multi}, and conservative bounding volumes \cite{park2018fast}. Such approaches enable the explicit modeling of uncertainty's impact on ensuring safety which can thus be used for safe control designs.
For instance, model predictive path integral (MPPI) \cite{williams2015model, williams2017information} as a model-based method has been employed in various applications to account for constrained optimal control for stochastic dynamical systems. 
It generates sampling trajectories parallelly based on the assumed Gaussian distribution of the control policy and then derives the near-optimal control policy.
To address unmodeled uncertainties, safe learning techniques have been proposed \cite{ luo2022sample, wang2017safe} to heuristically quantify the uncertainty during exploration and combine it with control theoretic approaches for safe operations.
However, the safety guarantee derived in these works heavily relies on the accuracy of the assumed or learned uncertainty distribution, and it remains challenging for robots operating in more realistic scenarios where the uncertainty distributions could be arbitrary and difficult to learn.

Recently, statistical techniques such as conformal prediction (CP) \cite{lei2015distribution, papadopoulos2002inductive} have been prevalent 
given their ability to 
quantify the prediction uncertainty 
of the employed machine learning or dynamics models. CP leverages a sequence of calibration data to create prediction sets that are likely to cover the true value 
with high probability. However, the assumption of static data distribution does not apply to
many robotic applications where robots could often operate in uncertain and dynamic environments.
To address this limitation, the concept of Adaptive Conformal Prediction (ACP) has been introduced in \cite{gibbs2021adaptive, gibbs2022conformal} that can adaptively adjust the uncertainty quantification to maintain prediction accuracy under distribution shift. This enables the system to address safety-critical control when the underlying uncertainty distribution of the environment is not static but evolves over time. 
For example, work in \cite{dixit2023adaptive} adopts the distance-based condition between the robot and the obstacles as constraints quantified by the ACP and embeds them to the model predictive control (MPC) framework to produce the collision-free path. However, as suggested in \cite{zeng2021safety} the distance-based safety constraint in the MPC framework may require a longer horizon for effective collision avoidance and thus unnecessarily increase the computational time. 

In this paper, 
we propose a novel uncertainty-aware safe control framework that integrates the probabilistic CBFs with adaptive conformal prediction for stochastic dynamical system with unknown motion noise. The framework is able to provably quantify the uncertainty of future robot states predicted by the stochastic system dynamics model,
and utilize them to design CBF-based control constraints for high-probabilty safety guarantee.
The \textbf{contributions} in this paper are threefold:
\begin{itemize}
    \item We propose a novel provable probabilistic safe barrier certificate with Adaptive Conformal Prediction (ACP-SBC) for safe control under unknown motion noise.
    \item Theoretical analysis are provided to justify the enforced high-probability safety guarantee of our proposed method, where the derived guarantee under quantified uncertainty is distribution-free, i.e. it does not require knowing the distribution of motion noise as a prior.
    \item Simulation results on multi-robot systems are provided to evaluate the effectiveness of the proposed framework.
\end{itemize}

\section{Preliminaries and Problem formulation}\label{Sec: prelim and prob}
\subsection{Control Barrier Functions} \label{sec: CBFs}
Consider the robot dynamics as the following control affine system, 
\begin{equation}
\label{eq: contol affine system} 
    \dot{x} = f(x) + g(x)u
\end{equation}
where $x \in \mathbb{R}^{n}$ is the robot state and $u \in \mathbb{R}^m$ is the control input. $f:\mathbb{R}^n \mapsto \mathbb{R}^n$ and $g:\mathbb{R}^{n} \mapsto {\mathbb{R}^{n \times m}}$ are locally Lipschitz continuous.

Control barrier functions (CBFs) \cite{ames2019control} are often designed to ensure the robot's safety. 
The set of robot states satisfying the safety constraints can be denoted by $\mathcal{H}$ and expressed as the zero-superlevel set of a continuously differentiable function $h(x): \mathbb{R}^{n} \mapsto \mathbb{R}$,
\begin{align}
    \mathcal{H}&=\left \{x \in \mathbb{R}^n | h(x)\geqslant 0 \right \}, \notag \\
    \partial \mathcal{H}&=\left \{x \in \mathbb{R}^n | h(x) = 0 \right \} \label{eq: safe constraint}, \\
    \mathrm {Int}(\mathcal {H})&=\left \{x \in \mathbb{R}^n | h(x) > 0 \right \}. \notag
\end{align}
where $ \partial \mathcal{H}$ and $\mathrm {Int}(\mathcal {H})$ define the boundary and interior of the safe set $\mathcal{H}$, respectively.

\begin{lemma}\label{def: CBFs}
\textbf{CBFs.} [Summarized from \cite{ames2019control}]
Given the system dynamics~Eq.(\ref{eq: contol affine system}) affine in control and the safe set $\mathcal{H}$ as the 0-super level set of a continuously differentiable function $h(x): \mathbb{R}^n \mapsto \mathbb{R}$, the function $h$ is called a control barrier function if there exists an extended class-$\mathcal{K}$ function $\mathcal{K}(\cdot)$, such that 
$\sup_{u\in \mathbb{R}^m}\{\dot{h}(x, u)\}\geq -\mathcal{K}(h(x))$ for all $x \in \mathbb{R}^n$. The admissible control space for any Lipschitz continuous controller $u \in \mathbb{R}^m$ rendering $\mathcal{H}$ forward invariant (i.e., keeping the system state $x$ staying in $\mathcal{H}$ overtime) thus becomes:
{\small\begin{align}\label{eq: safe control space}
    \mathcal{B}(x) = \left \{u \in \mathcal{U} | L_fh(x) +L_gh(x)u + \mathcal{K} (h(x)) \geqslant 0 \right\}
\end{align}} where $L_fh$ and $L_gh$ are the Lie derivatives of $h$ along the function $f$ and $g$ respectively. 
The extended class $\mathcal{K}$ function can be commonly chosen as $\mathcal{K} (h(x))=\gamma h(x)$ with $\gamma >0$ as in \cite{luo2020multi}.
The condition of $B(x,u)=L_fh(x) +L_gh(x)u + \mathcal{K} (h(x)) \geqslant 0$ in Eq.(\ref{eq: safe control space}) can be used to denote the safety barrier certificates (SBC) \cite{wang2017safety} that define satisfying controller $u$ for the robot to stay within the safe set $\mathcal{H}$. 
\end{lemma}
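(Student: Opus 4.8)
The plan is to show that $\mathcal{H}$ is forward invariant by converting the CBF inequality into a scalar differential inequality along closed-loop trajectories and then applying a comparison (Nagumo-type) argument. First I would fix any Lipschitz continuous feedback $u(x) \in \mathcal{B}(x)$ and observe that, since $f$ and $g$ are locally Lipschitz and $u$ is Lipschitz, the closed-loop vector field $x \mapsto f(x) + g(x)u(x)$ is locally Lipschitz; hence for every $x_0 \in \mathcal{H}$ there is a unique maximal solution $x(\cdot)$ of Eq.~(\ref{eq: contol affine system}) on an interval $[0, T_{\max})$. Differentiating $h$ along this solution and using the definition of the Lie derivatives gives $\dot h(x(t)) = L_f h(x(t)) + L_g h(x(t)) u(x(t))$, and membership in $\mathcal{B}(x)$ as defined in Eq.~(\ref{eq: safe control space}) immediately yields the differential inequality $\dot h(x(t)) \ge -\mathcal{K}(h(x(t)))$ for all $t \in [0, T_{\max})$.

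Next I would conclude $h(x(t)) \ge 0$ from this inequality together with $h(x_0) \ge 0$. When $\mathcal{K}$ is Lipschitz this is the textbook comparison lemma: the scalar problem $\dot y = -\mathcal{K}(y)$, $y(0) = h(x_0) \ge 0$, has $y \equiv 0$ as an equilibrium (since $\mathcal{K}(0)=0$), so $y(t) \ge 0$ and $h(x(t)) \ge y(t) \ge 0$. For the general extended class-$\mathcal{K}$ case I would argue directly by contradiction: if $h(x(t_1)) < 0$ for some $t_1$, set $t_0 = \sup\{t \le t_1 : h(x(t)) \ge 0\}$, so that $h(x(t_0)) = 0$ and $h(x(t)) < 0$ on $(t_0, t_1]$; on that interval $\mathcal{K}(h(x(t))) < 0$ because $\mathcal{K}$ is strictly increasing through the origin, hence $\dot h(x(t)) \ge -\mathcal{K}(h(x(t))) > 0$, forcing $h(x(t_1)) > h(x(t_0)) = 0$, a contradiction. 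Thus $x(t) \in \mathcal{H}$ for all $t \in [0, T_{\max})$, and since the trajectory never reaches $\partial \mathcal{H}$ from the interior, $\mathcal{H}$ is forward invariant; if one additionally wants $T_{\max} = \infty$ I would invoke the standard forward-completeness assumption on the closed loop.

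The main obstacle, and the only place where the \emph{extended} (rather than ordinary) class-$\mathcal{K}$ hypothesis is actually used, is precisely this last sign argument on the negative part of $h$: ordinary class-$\mathcal{K}$ functions are only defined on $[0,\infty)$ and would not control $\dot h$ once $h$ dips below zero, so extending $\mathcal{K}$ to the reals with $\mathcal{K}(r) < 0$ for $r < 0$ is what closes the argument. Everything else — existence and uniqueness of the closed-loop solution, the chain-rule computation of $\dot h$, and the nonemptiness of $\mathcal{B}(x)$ (which follows from $\sup_{u} \dot h(x,u) \ge -\mathcal{K}(h(x))$) — is routine, so I would keep those steps brief and concentrate the write-up on the comparison/contradiction step.
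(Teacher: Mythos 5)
Your argument is correct, but note that the paper itself gives no proof of this lemma --- it is stated as a result summarized from the cited reference (Ames et al.), so there is nothing in the paper to compare against line by line. Your reconstruction (Lipschitz closed loop $\Rightarrow$ unique solution, chain rule giving $\dot h \ge -\mathcal{K}(h)$, then the comparison lemma for Lipschitz $\mathcal{K}$ and the $t_0=\sup\{t\le t_1: h(x(t))\ge 0\}$ contradiction for the general extended class-$\mathcal{K}$ case) is exactly the standard argument behind the cited theorem, and your observation that the \emph{extended} class-$\mathcal{K}$ property is what controls $\dot h$ on the region $h<0$ is the right place to put the emphasis. The only caveats worth recording are the ones you already flag yourself: forward invariance is only guaranteed on the maximal interval of existence unless forward completeness is assumed, and nonemptiness of $\mathcal{B}(x)$ is what the $\sup_u$ condition buys you.
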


\subsection{Chance-Constrained Safety}
Even though the Lemma~\ref{def: CBFs} reveals the explicit condition for robot safety with the perfect observation of the robot motion, the presence of the uncertainty makes it challenging to enforce the safe set $\mathcal{H}$ forward invariant, or even impossible when the motion noise is unknown. In this paper, we consider the realistic situation where the robot system dynamics is stochastic and has the unknown random motion noise. With that, 
we consider the discrete-time stochastic control-affine system dynamics as follows,
\begin{equation}
\label{eq: stochastic dynamics}
        \hat{x}_{k+1} = \Tilde{f}(\hat{x}_k) + \Tilde{g}(\hat{x}_k)u_k + \epsilon_k
\end{equation}
where $\hat{x}_k \in \mathbb{R}^{n}$ is the robot state and $u_k \in \mathbb{R}^m$ is the control input. $\Tilde{f}:\mathbb{R}^n \mapsto \mathbb{R}^n$ and $\Tilde{g}:\mathbb{R}^{n} \mapsto {\mathbb{R}^{n \times m}}$ are locally Lipschitz continuous. $\epsilon_k$ is the unknown random motion noise. 
Note the distribution of the noise $\epsilon_k$ is unknown and may evolve over time, which makes the safety-critical control problem more challenging. 
To satisfy the safety constraint defined by Eq.\eqref{eq: safe constraint}, we consider this problem in a chance-constrained setting. Formally, given the user-defined failure probability $\beta \in (0, 1)$ for the safety condition, we have:
\begin{equation}\label{eq: prob h}
    P(h(\hat{x}_{k+1}) \geqslant 0) \geqslant 1-\beta.
\end{equation}
where $P(\centerdot)$ is the probability under the event $(\centerdot)$. 

Given the stochastic system dynamics in Eq.(\ref{eq: stochastic dynamics}) and motivated by Probabilistic Safety Barrier Certificates in \cite{luo2020multi}, the probabilistic safety constraint in Eq. (\ref{eq: prob h}) can be satisfied by the chance constraints over control $u_k$ summarized as the following lemma~\ref{lemma: P_CBFs to P_h}

\begin{lemma}\label{lemma: P_CBFs to P_h}
[Summarized from \cite{luo2020multi}]
Given the stochastic robot system defined in Eq.~\eqref{eq: stochastic dynamics}, the robot state $\hat{x}_{k}$ is considered as a random variable at the time step $k$, and the ${\mathcal{B}}(\hat{x}_k)$ is the corresponding admissible safe control space which renders the robot's state $\hat{x}_{k+1}$ collision-free at time step $k+1$ if without noise.
Then the probabilistic constraint on control is the sufficient condition of the probabilistic safety constraint defined in Eq. \eqref{eq: prob h}, which is formulated as,
\begin{equation}\label{eq: B2h}
       P(u_k \in {\mathcal{B}}(\hat{x}_{k})) \geqslant 1-\beta \Rightarrow P(\hat{x}_{k+1} \in \mathcal{H}) \geqslant 1-\beta
    \end{equation} 
\end{lemma}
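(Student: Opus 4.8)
The plan is to prove the implication by reducing the proof of Eq.~\eqref{eq: B2h} to a sample-path-wise set inclusion between two events and then applying monotonicity of the probability measure $P$. First I would introduce the events $A := \{u_k \in \mathcal{B}(\hat x_k)\}$ and $B := \{\hat x_{k+1} \in \mathcal{H}\} = \{h(\hat x_{k+1}) \geq 0\}$ on the common probability space carrying the random current state $\hat x_k$ (which aggregates the past noises $\epsilon_0,\ldots,\epsilon_{k-1}$), the control $u_k$, and the current noise $\epsilon_k$. The goal is then to show $A \subseteq B$, after which $P(B) \geq P(A) \geq 1-\beta$ follows immediately and yields the claim.

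The core step is establishing $A \subseteq B$. I would fix an arbitrary realization with $u_k \in \mathcal{B}(\hat x_k)$ and invoke the defining property of the admissible safe control space from Lemma~\ref{def: CBFs}, transcribed to the discrete-time stochastic dynamics Eq.~\eqref{eq: stochastic dynamics}: $\mathcal{B}(\hat x_k)$ is, by construction, exactly the set of controls under which the resulting one-step state $\hat x_{k+1} = \tilde f(\hat x_k) + \tilde g(\hat x_k) u_k + \epsilon_k$ satisfies $h(\hat x_{k+1}) \geq 0$, i.e.\ lies in $\mathcal{H}$. Since the chosen realization was arbitrary, this gives $A \subseteq B$. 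I would also note that $\mathcal{B}(\hat x_k)$ is itself a random set because $\hat x_k$ is random, but for each fixed outcome the membership test and the safety conclusion it entails are deterministic, so the inclusion is well posed; measurability of $A$ and $B$ is immediate from continuity of $h,\tilde f,\tilde g$ and measurability of $u_k$ in $\hat x_k$, so $P(A)$ and $P(B)$ are well defined.

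The main obstacle I anticipate is making that middle step genuinely airtight: the admissible set inherited verbatim from Lemma~\ref{def: CBFs} certifies only the noise-free prediction $\tilde f(\hat x_k) + \tilde g(\hat x_k) u_k$, so one must argue that the control constraint defining $\mathcal{B}(\hat x_k)$ is the \emph{uncertainty-aware} version that absorbs the effect of the unknown $\epsilon_k$ --- following \cite{luo2020multi}, this is exactly how $\mathcal{B}(\hat x_k)$ is to be understood here, and it is precisely what the adaptive conformal prediction machinery developed in the sequel will instantiate by augmenting the barrier constraint with an online-quantified margin. Once $\mathcal{B}(\hat x_k)$ is taken as that uncertainty-aware set, the realization-wise implication ``$u_k \in \mathcal{B}(\hat x_k) \Rightarrow \hat x_{k+1}\in\mathcal{H}$'' holds by construction, and the probabilistic conclusion $P(\hat x_{k+1}\in\mathcal{H}) \geq P(u_k\in\mathcal{B}(\hat x_k)) \geq 1-\beta$ drops out of monotonicity with no further work.
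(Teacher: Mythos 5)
Your proof is correct and follows essentially the same route as the paper's: establish the realization-wise event inclusion $\{u_k \in \mathcal{B}(\hat x_k)\} \subseteq \{\hat x_{k+1} \in \mathcal{H}\}$ and conclude by monotonicity of $P$. You additionally make explicit a point the paper's own one-line proof glosses over --- that $\mathcal{B}(\hat x_k)$ must be read as the uncertainty-aware admissible set absorbing $\epsilon_k$ (as later instantiated by the conformal margin) for the inclusion to hold under the noisy dynamics --- which is a welcome clarification rather than a deviation.
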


\begin{proof}
Assuming the robot is at the safe state initially and given the definition of the safe control space in Eq.(\ref{eq: safe control space}), we have $u_k \in {\mathcal{B}}(\hat{x}_{k})$  $\Rightarrow$ $\hat{x}_{k+1} \in \mathcal{H}$ and $u_k \notin {\mathcal{B}}(\hat{x}_{k})$  $\not\Rightarrow$ $\hat{x}_{k+1} \notin \mathcal{H}$. Therefore, we have $P(u_k\in {\mathcal{B}}(\hat{x}_{k})) \leqslant P(\hat{x}_{k+1} \in \mathcal{H})$ and thus Lemma \ref{lemma: P_CBFs to P_h} holds true.
\end{proof} 

The work in \cite{luo2020multi} derived the deterministic admissible control space where Eq. (\ref{eq: B2h}) holds through assuming $\epsilon_k$ in Eq.(\ref{eq: stochastic dynamics}) follows a uniform distribution known as prior. However, it is non-trivial to derive such control space when the distribution of $\epsilon_k$ is unknown and may be time-varying.

\subsection{Adaptive Conformal Prediction} \label{sec: cp}
Conformal Prediction (CP) \cite{papadopoulos2002inductive} is a statistical method to formulate a certified region for complex prediction models without making assumptions about the prediction model.

Given a dataset $\mathcal{D}=\{(X_k, Y_k)\}$ (where $k=1,\dots, d$, $X_k \in \mathbb{R}^n$, and $Y_k \in \mathbb{R}$) and any prediction model $F: X \mapsto Y$ trained from $\mathcal{D}$, e.g. linear model or deep learning model, the goal of CP is to obtain $\Bar{S} \in \mathbb{R}$ to construct a region $R(X^{*})=[F(X^{*})-\Bar{S}, F(X^{*})+\Bar{S}]$ so that $P(Y^{*} \in R(X^{*})) \geqslant 1-\alpha$, where $\alpha \in (0, 1)$ is the failure probability and $X^{*}, Y^{*}$ are the testing model input and the true 
value respectively. 
We can then define the nonconformity score $S_{k} \in \mathbb{R}^{+}$ as $S_{k} = |Y_k -  F(X_k)|$. The large nonconformity score suggests a bad prediction of $F(X_k)$. 
     
To obtain $\Bar{S}$, the nonconformity scores $S_{1}, \dots, S_{d}$ are assumed as independently and identically distributed (i.i.d.) real-valued random variables, which allows permutating element of the set $\mathcal{S} = \{ S_{1}, \dots, S_{d} \}$ (assumption of exchangeability) \cite{papadopoulos2002inductive}. 
Then, the set $\mathcal{S}$ is sorted in a non-decreasing manner expressed as $\mathcal{S} = \{S^{(1)},\dots, S^{(r)}, \dots, S^{(d)} \}$ where $S^{(r)}$ is the $(1-\alpha)$ sample quantile of $\mathcal{S}$. $r$ in $S^{(r)}$ is defined as $r:= \lceil (d+1)(1-\alpha) \rceil$ where $\lceil \cdot \rceil$ is the ceiling function. Let $\Bar{S} = S^{(r)}$, then the following probability holds \cite{papadopoulos2002inductive}, 
\begin{equation}\label{eq: preli-cp}
         P(F(X^{*})-S^{(r)} \leqslant Y^{*} \leqslant F(X^{*})+S^{(r)}) \geqslant 1-\alpha
\end{equation}

Thus, we can utilize $S^{(r)}$ to certify the uncertainty region over the predicted value of $Y^{*}$ given the prediction model $F$. It is noted that $P(F(X^{*})-S^{(r)} \leqslant Y^{*} \leqslant F(X^{*})+S^{(r)}) \in [1-\alpha, 1-\alpha+\frac{1}{1+d})$ \cite{papadopoulos2002inductive} and thus the probability in the Eq.(\ref{eq: preli-cp}) holds.

\textbf{Adaptive Conformal Prediction (ACP).} CP relies on the assumption of exchangeability \cite{papadopoulos2002inductive} about the elements in the set $\mathcal{S}$. However, this assumption may be violated easily in dynamical time series prediction tasks \cite{gibbs2021adaptive}.
To achieve reliable dynamical time series prediction in a long-time horizon,
the notion of ACP\cite{gibbs2022conformal} was proposed to re-estimate the quantile number online.
The estimating process is shown below:
\begin{small}
    \begin{equation}
    \label{eq: ACP update}
    \alpha_{k+1} = \alpha_{k} + \delta (\alpha-e_k) \; \mathrm{with}\; e_k=
    \begin{aligned}
        \begin{cases}
            0, \mathrm{if}\; S_{k} \leqslant S_k^{(r)}, \\
            1, \text{otherwise}.
        \end{cases}
    \end{aligned}
    \end{equation}
\end{small}

where $\delta$ is the user-specified learning rate. $S_k \in \mathbb{R}^{+}$ is the nonconformity score at time step k, and $S_k^{(r)}$ is the sample quantile. It is noted that if $S_{k} \leqslant S_k^{(r)}$, then $\alpha_{k+1}$ will increase to undercover the region induced by the quantile number $S_k^{(r)}$.

\subsection{Problem Statement} \label{sec: problem statement}

In this paper, we aim to 1) effectively quantify the chance constraints of the safe conditions and 2) integrate the probabilistic control constraint into the MPC framework for deriving satisfying control inputs.
The optimization problem can be formulated as follows,
\begin{align}\label{eq: MPC-CBFs-prob}
    \quad &\min_{{u}_{k:k+H-1 | k}}\,\, l_T(\hat{x}_{k+H|k}) +\sum^{H-1}_{\tau=0} l_s(\hat{x}_{k+\tau|k}, {u}_{k+\tau|k}) \\
        s.t.& \quad
            \hat{x}_{k+\tau+1|k} = \Tilde{f}(\hat{x}_{k+\tau|k}) + \Tilde{g}(\hat{x}_{k+\tau|k}){u}_{k+\tau|k} + \epsilon_{k+\tau|k}, \tag{9a} \label{eq: dynamics} \\
            & \quad P(u_{k+\tau|k} \in \mathcal{B}) \geqslant 1-\alpha, \tag{9b} \label{eq: probability}\\
            & \quad {u}_{k+\tau|k} \in [u_{min}, u_{max}],\quad\forall \tau=0,\ldots,H-1 \tag{9c}
\end{align}
where $\hat{x}_{k+\tau|k}$ represents the state vector at time step $k+\tau$ predicted at the time step $k$ from the current state $\hat{x}_{k}$, by applying the control input sequence ${u}_{k:k+H-1 | k}$ to the system dynamics Eq.\eqref{eq: dynamics}. $\epsilon_{k+\tau|k}$ is the motion noise at time step $k+\tau$ estimated 
at the time step $k$. $H$ is the time horizon and $l_T(\hat{x}_{k+H|k}), l_s(\hat{x}_{k+\tau|k}, {u}_{k+\tau|k})$ are the terminal cost and stage cost respectively.
$P(u_{k+\tau|k} \in \mathcal{B}) \geqslant 1-\alpha$ represents the probabilistic control constraint in this paper, and $u_{min}, u_{max}$ represent the control bounds.

It is noted that the control constraints in Eq.(\ref{eq: MPC-CBFs-prob}b) are difficult to address due to the unknown distribution-free motion noise $\epsilon_k$ in the actual stochastic dynamical system in Eq.~\ref{eq: stochastic dynamics}. 
We will propose a method to obtain the deterministic control constraints satisfying the chance constraints 
Eq.(\ref{eq: MPC-CBFs-prob}b) in the Section \ref{sec: method and alg}.

\section{Method and algorithms} \label{sec: method and alg}
\subsection{Conformal Prediction for Safety Barrier Certificates} \label{sec: ACP-SBC}

Inspired by the safety barrier certificates (SBC) in \cite{wang2017safety} with Lemma~\ref{def: CBFs}, we define the following terms.
   \begin{equation}\label{eq: bk def}
       B_k(x_k, u_k) = L_fh(x_k)+L_gh(x_k)u_k + \mathcal{K}(h(x_k))
   \end{equation}
   \begin{equation}\label{eq: hat bk def}
       \hat{B}_k (\hat{x}_k, u_k) = L_fh(\hat{x}_k) + L_gh(\hat{x}_k)u_k + \mathcal{K} (h(\hat{x}_k))
   \end{equation}

\begin{rem}
  $B_k(x_k, u_k)$ is considered as a prediction function similar to the function $F(X)$ in section \ref{sec: cp}, which is a reasonable assumption proved in Proposition \ref{prop: exist of CBFs}. Note that although the continuous-time CBF is adopted here with $L_fh,L_gh$ to compute $B_k,\hat{B}_k$, they can be transformed into the discrete forms as done in \cite{breeden2021control, luo2022sample}. 
\end{rem}

Given the Eq.(\ref{eq: bk def}) and Eq.(\ref{eq: hat bk def}), the nonconformity score between $\hat{B}_k$ and $B_k$ at global time step $k$ is formulated as $E_{B_k} = |\hat{B}_k - B_k|\in \mathbb{R}^{+}$.
$E_{B_k}$ will be stored in a non-decreasing order to construct the non-conformity score set $\mathfrak{E}$. The $1-\alpha$ sample quantile of the set $\mathfrak{E}$ is expressed as $E_{B_k}^{(r)}$ with $r= \lceil k(1-\alpha) \rceil$ where $\lceil \cdot \rceil$ is the ceiling function. Then, the following holds given Eq.(\ref{eq: preli-cp}),
    \begin{equation} \label{eq: B_hat-B_k}
        P(| \hat{B}_k - B_k| \leqslant E_{B_k}^{(r)}) \geqslant 1-\alpha
    \end{equation}

\begin{Theorem}\label{them: ACP-SBC}
Given the quantile number $E^{(r)}_{B_k}$ from the set $\mathfrak{E}$ and the failure probability $\alpha \in (0, 1)$, the state-dependent control space $\mathcal{\Tilde{B}}(\hat{x}_k)$ defined as follows  
can lead to the same probability guarantee of safety constraint, i.e. $P(h(\hat{x}_{k+1}) \geqslant 0) \geqslant 1-\alpha$.
    \begin{equation}\label{eq: def hat_B}
    \mathcal{\Tilde{B}}(\hat{x}_k) = \left \{u_k \in \mathcal{U} | B_k - E^{(r)}_{B_k} \geqslant 0 \right\}
    \end{equation}
\end{Theorem}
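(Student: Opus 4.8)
The plan is to prove Theorem~\ref{them: ACP-SBC} in two moves: first convert the deterministic membership $u_k\in\tilde{\mathcal{B}}(\hat{x}_k)$ into the chance constraint $P(u_k\in\mathcal{B}(\hat{x}_k))\geq 1-\alpha$ using the conformal coverage bound Eq.~\eqref{eq: B_hat-B_k}, and then invoke Lemma~\ref{lemma: P_CBFs to P_h} to turn that chance constraint into $P(\hat{x}_{k+1}\in\mathcal{H})\geq 1-\alpha$, i.e. $P(h(\hat{x}_{k+1})\geq 0)\geq 1-\alpha$. The only preliminary identification is $\mathcal{B}(\hat{x}_k)=\{u_k\in\mathcal{U}\mid \hat{B}_k(\hat{x}_k,u_k)\geq 0\}$, which is just Eq.~\eqref{eq: safe control space} rewritten with the notation of Eq.~\eqref{eq: hat bk def}.

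For the first move I would rewrite the coverage event of Eq.~\eqref{eq: B_hat-B_k}: $\{|\hat{B}_k-B_k|\leq E^{(r)}_{B_k}\}$ is equivalent to $\{B_k-E^{(r)}_{B_k}\leq\hat{B}_k\leq B_k+E^{(r)}_{B_k}\}$, so it implies the one-sided bound $\hat{B}_k\geq B_k-E^{(r)}_{B_k}$, and this bound therefore holds with probability at least $1-\alpha$. Now take any $u_k\in\tilde{\mathcal{B}}(\hat{x}_k)$; by Eq.~\eqref{eq: def hat_B} this means the deterministic inequality $B_k-E^{(r)}_{B_k}\geq 0$ holds, where $B_k$ and the sample quantile $E^{(r)}_{B_k}\in\mathfrak{E}$ (with $r=\lceil k(1-\alpha)\rceil$) are both available at step $k$. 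Chaining the two inequalities on the coverage event gives $\hat{B}_k\geq B_k-E^{(r)}_{B_k}\geq 0$, hence $u_k\in\mathcal{B}(\hat{x}_k)$ on that event; consequently $P(u_k\in\mathcal{B}(\hat{x}_k))\geq P(|\hat{B}_k-B_k|\leq E^{(r)}_{B_k})\geq 1-\alpha$. Applying Lemma~\ref{lemma: P_CBFs to P_h} with the failure probability set to $\alpha$ then yields $P(\hat{x}_{k+1}\in\mathcal{H})\geq 1-\alpha$, which is exactly the claim.

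The subtle points are mostly bookkeeping. One is picking the correct side of the absolute-value bound: the argument needs the lower bound $\hat{B}_k\geq B_k-E^{(r)}_{B_k}$, so that a deterministic nonnegativity margin on $B_k$ forces $\hat{B}_k\geq 0$ with the desired probability, while the upper bound plays no role; a second is threading $\alpha$ (rather than $\beta$) through Lemma~\ref{lemma: P_CBFs to P_h}. The part I expect to require the most explicit justification --- and the closest thing to an obstacle --- is arguing that $\tilde{\mathcal{B}}(\hat{x}_k)$ is genuinely a deterministic, computable constraint usable inside the MPC of Eq.~\eqref{eq: MPC-CBFs-prob}: this rests on treating $B_k$ as a valid prediction function (the remark after Eq.~\eqref{eq: hat bk def} and Proposition~\ref{prop: exist of CBFs}) and on $E^{(r)}_{B_k}$ being the $(1-\alpha)$ sample quantile whose coverage property is inherited from Eq.~\eqref{eq: preli-cp} under exchangeability of the nonconformity scores in $\mathfrak{E}$ (the assumption that ACP later relaxes via the update Eq.~\eqref{eq: ACP update}). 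Beyond this, the theorem is essentially a clean composition of the conformal guarantee in Eq.~\eqref{eq: B_hat-B_k} with Lemma~\ref{lemma: P_CBFs to P_h}.
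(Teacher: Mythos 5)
Your proposal is correct and follows essentially the same route as the paper's own proof: extract the one-sided bound $\hat{B}_k \geqslant B_k - E^{(r)}_{B_k}$ from the conformal coverage event of Eq.~\eqref{eq: B_hat-B_k}, chain it with the deterministic condition $B_k - E^{(r)}_{B_k}\geqslant 0$ defining $\mathcal{\Tilde{B}}(\hat{x}_k)$ to conclude $P(u_k\in\mathcal{B}(\hat{x}_k))\geqslant 1-\alpha$, and then invoke Lemma~\ref{lemma: P_CBFs to P_h}. Your write-up is in fact somewhat more explicit than the paper's (which phrases the same chaining step rather tersely), but there is no substantive difference in the argument.
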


\begin{proof}
    We have $P(| \hat{B}_k - B_k| \leqslant E_{B_k}^{(r)}) \geqslant 1-\alpha$. Given $| \hat{B}_k - B_k| \leqslant E_{B_k}^{(r)}$, the following formulation holds: $-E_{B_k}^{(r)} \leqslant \hat{B}_k - B_k \leqslant E_{B_k}^{(r)}$, $ B_k -E_{B_k}^{(r)} \leqslant \hat{B}_k \leqslant B_k + E_{B_k}^{(r)}$. Since $B_k \geqslant 0$ is guaranteed by the CBFs and $E_{B_k}^{(r)} \in \mathbb{R}^{+}$, this equation holds only by
    \begin{equation} \label{eq: hat_B compt}
       B_k -E_{B_k}^{(r)} = L_fh(x_k) +L_gh(x_k)u_k + \mathcal{K}(h(x_k)) -E_{B_k}^{(r)} \geqslant 0. 
    \end{equation}
     
    With Lemma \ref{lemma: P_CBFs to P_h}, it yields $P(h(\hat{x}_{k+1}) \geqslant 0) \geqslant 1-\alpha$.
\end{proof}

Solving the chance constraint embedded in MPC is a challenging task since (a) it is non-trivial to quantify the time-varying noise and (b) $\hat{B}_k$ at future time steps is difficult to obtain. Theorem \ref{them: ACP-SBC} transfers the chance constraint to a deterministic constraint to make MPC solvable. We just consider the one-step ($\tau=1$) SBC for MPC horizon for analysis convenience. The multi-step SBC for MPC is analyzed in \ref{sec: ACP}.

\begin{prop}\label{prop: exist of CBFs}
     Given the robot state $\hat{x}_k\in \mathbb{R}^n$ generated by Eq.(\ref{eq: stochastic dynamics}), let the non-conformity score of the robot state expressed by $E_{x_k} = ||\hat{x}_k - x_k||$. 
     Then $E_{B_k} = |\hat{B}_k - B_k| \leqslant \mathcal{L}||\hat{x}_k - x_k||= \mathcal{L} E_{x_k}$ 
     holds where $\mathcal{L}=(\mathcal{L}_{L_fh} + \mathcal{L}_{L_gh}||u_k|| + \mathcal{L}_{\mathcal{K}h})  $ with $\mathcal{L}_{(\cdot)}$ as the Lipschitz constant under the function $(\cdot)$. This validates the existence of Eq.(\ref{eq: B_hat-B_k}). 
\end{prop}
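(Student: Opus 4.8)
The plan is to expand the difference $\hat{B}_k - B_k$ directly from the definitions in Eq.(\ref{eq: bk def}) and Eq.(\ref{eq: hat bk def}) and then bound it with the triangle inequality together with the Lipschitz regularity of the three constituent maps. First I would write
\[
\hat{B}_k - B_k = \bigl(L_fh(\hat{x}_k) - L_fh(x_k)\bigr) + \bigl(L_gh(\hat{x}_k) - L_gh(x_k)\bigr)u_k + \bigl(\mathcal{K}(h(\hat{x}_k)) - \mathcal{K}(h(x_k))\bigr),
\]
so that, taking absolute values and applying $|L_gh(\hat{x}_k)u_k - L_gh(x_k)u_k| \leq \|L_gh(\hat{x}_k) - L_gh(x_k)\|\,\|u_k\|$ (Cauchy-Schwarz, since $L_gh$ is a row vector), one obtains
\[
E_{B_k} \leq |L_fh(\hat{x}_k) - L_fh(x_k)| + \|L_gh(\hat{x}_k) - L_gh(x_k)\|\,\|u_k\| + |\mathcal{K}(h(\hat{x}_k)) - \mathcal{K}(h(x_k))|.
\]

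Next I would argue that each of $L_fh$, $L_gh$, and $\mathcal{K}\circ h$ is Lipschitz on the operating domain, with constants $\mathcal{L}_{L_fh}$, $\mathcal{L}_{L_gh}$, and $\mathcal{L}_{\mathcal{K}h}$ respectively: $h$ is continuously differentiable and $f,g$ are locally Lipschitz (the standing assumption on Eq.(\ref{eq: contol affine system})), hence $L_fh=\nabla h^\top f$ and $L_gh=\nabla h^\top g$ are locally Lipschitz; and $\mathcal{K}$, being an extended class-$\mathcal{K}$ function (e.g. the linear choice $\mathcal{K}(h)=\gamma h$) composed with the $C^1$ function $h$, is Lipschitz as well. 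Substituting the three estimates, each of the form $(\cdot)\leq \mathcal{L}_{(\cdot)}\|\hat{x}_k - x_k\|$, and factoring out $\|\hat{x}_k - x_k\| = E_{x_k}$ gives
\[
E_{B_k} \leq \bigl(\mathcal{L}_{L_fh} + \mathcal{L}_{L_gh}\|u_k\| + \mathcal{L}_{\mathcal{K}h}\bigr)E_{x_k} = \mathcal{L}E_{x_k},
\]
which is exactly the claimed inequality.

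Finally, to explain why this "validates the existence of Eq.(\ref{eq: B_hat-B_k})", I would observe that the bound makes $E_{B_k}$ a finite real-valued random variable whenever $E_{x_k}$ is finite (which holds since $\hat{x}_k$ is generated by Eq.(\ref{eq: stochastic dynamics})), and that the map $x \mapsto B_k(x,u_k)$ is continuous; hence $B_k$ is a legitimate prediction function in the sense of Section \ref{sec: cp}, so the conformal coverage guarantee Eq.(\ref{eq: preli-cp}) transfers verbatim to the scores $E_{B_k}$ and yields Eq.(\ref{eq: B_hat-B_k}).

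The main obstacle is not the algebra, which is a routine triangle-inequality computation, but the careful justification of the Lipschitz constants: strictly speaking $f$ and $g$ are only \emph{locally} Lipschitz, so one must argue that the closed-loop trajectories together with the bounded control inputs $u_k\in[u_{min},u_{max}]$ remain in a compact region, on which local Lipschitzness upgrades to uniform constants $\mathcal{L}_{L_fh},\mathcal{L}_{L_gh},\mathcal{L}_{\mathcal{K}h}$. I would discharge this by explicitly invoking compactness of the operating set, as is standard in CBF-based analyses.
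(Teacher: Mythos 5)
Your proof is correct and follows essentially the same route as the paper's: expand $\hat{B}_k - B_k$ termwise, apply the triangle inequality, and bound each difference by its Lipschitz constant times $\|\hat{x}_k - x_k\|$. The only difference is that you additionally justify the existence of the uniform Lipschitz constants (via compactness of the operating region, since $f,g$ are only locally Lipschitz) and spell out the link back to Eq.~(\ref{eq: preli-cp}); the paper takes both for granted.
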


\begin{proof}
With $B_k,\hat{B}_k$ defined in Eq.(\ref{eq: bk def}) and (\ref{eq: hat bk def}), the following derivation holds:
\begin{align}
    |\hat{B}_k-B_k| = &| L_{f}h(\hat{x}_k) + L_{g}h(\hat{x}_k) u_k + \mathcal{K}(h(\hat{x}_k)) \notag \\
    &- L_{f}h(x_k) - L_{g}h(x_k) u_k - \mathcal{K}(h(x_k)) | \notag \\
= &| L_{f}h(\hat{x}_k) - L_{f}h(x_k) + L_{g}h(\hat{x}_k) u_k \notag \\
&- L_{g}h(x_k) u_k + \mathcal{K}(h(\hat{x}_k)) - \mathcal{K}(h(x_k)) | \notag \\
\leqslant &(\mathcal{L}_{L_fh} + \mathcal{L}_{L_gh}||u_k|| + \mathcal{L}_{\mathcal{K}h}) | |\hat{x}_k - x_k|| \notag \\
= &\mathcal{L} E_{x_k} \notag .
\end{align}
\end{proof}

Proposition \ref{prop: exist of CBFs} not only shows the existence of Eq.(\ref{eq: B_hat-B_k}), but also allows control constraints in Eq.(\ref{eq: hat_B compt}) to be integrated into the MPC framework. Recall that the robot may move under the motion noise with time-varying distribution, which could easily break the i.i.d. assumption of the nonconformity score set. 
Through Proposition \ref{prop: exist of CBFs}, $\mathcal{L}$ connects $E_{B_k}$ and $E_{x_k}$, and thus if the set constructed from the element $E_{x_k}$ is not i.i.d., then the set $\mathfrak{E}$ is not i.i.d.. Therefore, it is reasonable to adopt ACP to quantify Eq.(\ref{eq: B_hat-B_k}).

\subsection{Adaptive Conformal Prediction for MPC}\label{sec: ACP}
The safety constraint in the optimization problem (Eq.\eqref{eq: MPC-CBFs-prob}) is computed by Eq.(\ref{eq: hat_B compt}) in Theorem \ref{them: ACP-SBC}. However, $E_{B^{\tau}_k} = |\hat{B}^{\tau}_k - B^{\tau}_k|$ (where $\tau$ is the MPC horizon time step) can not be obtained because we can not acquire $\hat{B}^{\tau}_k$ at the future MPC horizon time step under unknown motion noise. Hence, we cannot compute the $\alpha_k$ in Eq.(\ref{eq: ACP update}). We will adopt the time-lagged method from \cite{dixit2023adaptive} to tackle this problem.

\begin{figure}[t]	
	\centering	
	\includegraphics[scale=1, width=\linewidth]{./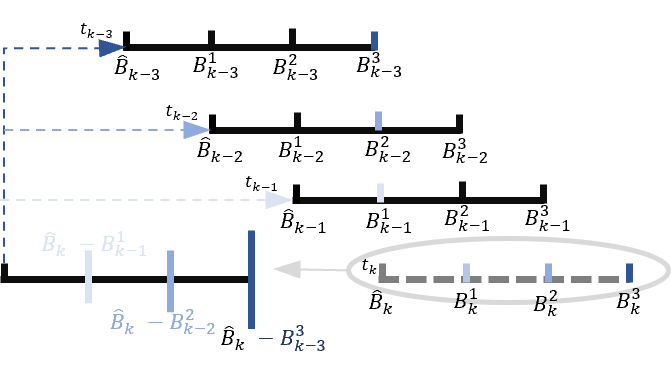}	
	\caption{Computation steps for time-lagged ACP. In the current time horizon ($H=3$) starting at time step $k$, the time-lagged error of CBFs can be computed by $E^{\tau}_{B_k} = | \hat{B}_k - B^{\tau}_{k-\tau}|$. Then, $E^{\tau}_{B_k}$, $\tau=1, \dots, H$, are stored in the nonconformity set $\mathfrak{E}$.
 }
	\label{Fig: ACP}
\end{figure}

The method of ACP for MPC in \cite{dixit2023adaptive} utilized $\tau$ step-ahead nonconformity score to represent the future nonconformity score, which is named the time-lagged method. The computation process is presented in Fig.\ref{Fig: ACP}.

However, the time-lagged method does not provide a certain probability coverage guarantee, and the probability coverage of the safety constraint throughout the entire MPC horizon is still not assured. The theoretic analysis of these two issues were provided by \cite{dixit2023adaptive}, which is summarized as follows.

\begin{lemma}
\label{lemma: probability guarantee for ACP}
    \textbf{ACP Probability Guarantee.} Given the learning rate $\delta$, the initial failure probability $\alpha_0 \in (0,1)$, and the MPC's horizon $H$. Eq.(\ref{eq: B_hat-B_k}) under MPC's horizon is bounded by,

    $1-\alpha-p_1 \leqslant \frac{1}{H} \sum_{k=1}^{H} P(E_{B_k} \leqslant E_{B_k}^{(r)}) \leqslant 1-\alpha+p_2$,
    where $p_1=\frac{\alpha_0 + \delta}{H\delta}$, $p_2=\frac{1-\alpha_0 + \delta}{H\delta}$ so that $\lim_{H\to\infty} p_1=0$  and $\lim_{H\to\infty} p_2=0$
\end{lemma}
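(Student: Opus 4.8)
The plan is to follow the telescoping argument behind Adaptive Conformal Prediction (as in \cite{gibbs2021adaptive, dixit2023adaptive}), specialized to the nonconformity sequence $E_{B_k}$. The first step is to re-express the quantity of interest in terms of the error indicators $e_k$ from the ACP update Eq.(\ref{eq: ACP update}). Since $e_k$ is the Bernoulli variable that equals $0$ precisely on the coverage event $\{E_{B_k}\leqslant E_{B_k}^{(r)}\}$, we have $P(E_{B_k}\leqslant E_{B_k}^{(r)}) = 1-\mathbb{E}[e_k]$, hence $\frac{1}{H}\sum_{k=1}^{H}P(E_{B_k}\leqslant E_{B_k}^{(r)}) = 1-\frac{1}{H}\sum_{k=1}^{H}\mathbb{E}[e_k]$. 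So the lemma is equivalent to a two-sided bound on the average miscoverage $\frac{1}{H}\sum_{k=1}^{H}\mathbb{E}[e_k]$ centered at $\alpha$, with deviation at most $p_1$ below and $p_2$ above.

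Next I would telescope the recursion $\alpha_{k+1}=\alpha_k+\delta(\alpha-e_k)$ over $k=1,\dots,H$, starting from $\alpha_1=\alpha_0$, to obtain $\alpha_{H+1}=\alpha_0+H\delta\alpha-\delta\sum_{k=1}^{H}e_k$, i.e. $\frac{1}{H}\sum_{k=1}^{H}e_k=\alpha+\frac{\alpha_0-\alpha_{H+1}}{H\delta}$. Taking expectations and plugging into the identity from the first step gives $\frac{1}{H}\sum_{k=1}^{H}P(E_{B_k}\leqslant E_{B_k}^{(r)})=1-\alpha-\frac{\alpha_0-\mathbb{E}[\alpha_{H+1}]}{H\delta}$, so the entire statement reduces to bounding the final iterate $\mathbb{E}[\alpha_{H+1}]$.

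The technical core, and the step I expect to need the most care, is the boundedness claim $\alpha_k\in(-\delta,\,1+\delta)$ for all $k\geqslant 1$, proved by induction. The base case holds because $\alpha_0\in(0,1)$. For the inductive step I would split on the location of $\alpha_k$: if $\alpha_k\in[0,1]$ then $\alpha-e_k\in(-1,1)$ so $\alpha_{k+1}\in(\alpha_k-\delta,\alpha_k+\delta)\subseteq(-\delta,1+\delta)$; if $\alpha_k\in(-\delta,0)$ then the quantile level $1-\alpha_k$ exceeds $1$, so by the usual convention the certified region is all of $\mathbb{R}$, forcing $e_k=0$ and $\alpha_{k+1}=\alpha_k+\delta\alpha\in(-\delta,\delta)$; and symmetrically if $\alpha_k\in(1,1+\delta)$ the level $1-\alpha_k$ is nonpositive, the region is empty, $e_k=1$, and $\alpha_{k+1}=\alpha_k+\delta(\alpha-1)\in(1-\delta,1+\delta)$. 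The delicate point is precisely this self-correcting behavior at the boundaries, which depends on fixing a convention for the quantile when $1-\alpha_k\notin(0,1)$; I would state that convention explicitly before running the induction, and this is where the identification of $E_{B_k}$ as a legitimate conformal score, licensed by Proposition~\ref{prop: exist of CBFs} and Eq.(\ref{eq: B_hat-B_k}), is used.

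Finally I would assemble the bounds. Since $-\delta<\mathbb{E}[\alpha_{H+1}]<1+\delta$, we get $\frac{\alpha_0-\mathbb{E}[\alpha_{H+1}]}{H\delta}<\frac{\alpha_0+\delta}{H\delta}=p_1$ and $\frac{\alpha_0-\mathbb{E}[\alpha_{H+1}]}{H\delta}>\frac{\alpha_0-1-\delta}{H\delta}=-\frac{1-\alpha_0+\delta}{H\delta}=-p_2$, so substituting into the identity above yields $1-\alpha-p_1\leqslant\frac{1}{H}\sum_{k=1}^{H}P(E_{B_k}\leqslant E_{B_k}^{(r)})\leqslant 1-\alpha+p_2$. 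The limits $\lim_{H\to\infty}p_1=\lim_{H\to\infty}p_2=0$ then follow immediately because the numerators $\alpha_0+\delta$ and $1-\alpha_0+\delta$ are constants independent of $H$. This completes the plan.
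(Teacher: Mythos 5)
Your proof is correct and follows essentially the same route as the source this lemma rests on: the paper itself gives no proof of Lemma~\ref{lemma: probability guarantee for ACP}, deferring entirely to \cite{dixit2023adaptive}, whose argument is exactly the standard ACP telescoping of the update Eq.(\ref{eq: ACP update}) from \cite{gibbs2021adaptive} combined with the boundedness $\alpha_k\in(-\delta,1+\delta)$ that you reconstruct by induction. Your explicit statement of the quantile convention when $1-\alpha_k\notin(0,1)$ is the right place to be careful, and your final bounds reproduce $p_1=\frac{\alpha_0+\delta}{H\delta}$ and $p_2=\frac{1-\alpha_0+\delta}{H\delta}$ exactly.
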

 
\begin{prop}\label{prop: cp prob guarantee under MPC}
    \textbf{Probability Guarantee under MPC.} The probability of the safe constraint under MPC will converge to $1-\alpha-p$,
    \begin{equation}
        \frac{1}{H}\sum_{k=1}^{H} P(h(\hat{x}_{k+1})\geqslant 0) \geqslant 1-\alpha-p
    \end{equation}
    where $p=(\alpha + \delta) / (H\delta)$ is a constant related to the learning rate $\delta$, the MPC horizon $H$. When the MPC horizon $H \rightarrow \infty$, the algorithm will converge to $1-\alpha$. The proof can be found in \cite{dixit2023adaptive}.
\end{prop}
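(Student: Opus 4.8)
The plan is to combine the deterministic reformulation of the chance constraint from Theorem~\ref{them: ACP-SBC} with the average coverage bound for the adaptively estimated quantile in Lemma~\ref{lemma: probability guarantee for ACP}. The MPC in \eqref{eq: MPC-CBFs-prob} is solved with the deterministic safe barrier certificate of Theorem~\ref{them: ACP-SBC} imposed at each horizon step, so the returned control satisfies $B_k - E^{(r)}_{B_k} \geqslant 0$ (cf.\ Eq.~\eqref{eq: hat_B compt}), i.e.\ $u_k \in \mathcal{\Tilde{B}}(\hat{x}_k)$ holds deterministically. First I would fix a step $k$ along the predicted trajectory and restrict attention to the ACP coverage event $\{E_{B_k} \leqslant E^{(r)}_{B_k}\}$: on this event $|\hat{B}_k - B_k| \leqslant E^{(r)}_{B_k}$, so $\hat{B}_k \geqslant B_k - E^{(r)}_{B_k}$, and combining this with $B_k - E^{(r)}_{B_k} \geqslant 0$ exactly as in the proof of Theorem~\ref{them: ACP-SBC} yields $\hat{B}_k \geqslant 0$, hence $u_k \in \mathcal{B}(\hat{x}_k)$; the deterministic implication underlying Lemma~\ref{lemma: P_CBFs to P_h} then gives $h(\hat{x}_{k+1}) \geqslant 0$. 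Thus $\{E_{B_k} \leqslant E^{(r)}_{B_k}\} \subseteq \{h(\hat{x}_{k+1}) \geqslant 0\}$, and taking probabilities yields the pointwise bound $P(h(\hat{x}_{k+1}) \geqslant 0) \geqslant P(E_{B_k} \leqslant E^{(r)}_{B_k})$.

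Next I would average this inequality over the horizon indices $k = 1, \dots, H$ to obtain
\[
\frac{1}{H}\sum_{k=1}^{H} P(h(\hat{x}_{k+1}) \geqslant 0) \;\geqslant\; \frac{1}{H}\sum_{k=1}^{H} P(E_{B_k} \leqslant E^{(r)}_{B_k}),
\]
and then invoke the lower bound of Lemma~\ref{lemma: probability guarantee for ACP} with the initial failure probability set to $\alpha_0 = \alpha$, which bounds the right-hand side below by $1 - \alpha - p_1$ with $p_1 = (\alpha + \delta)/(H\delta) =: p$. Since $\delta > 0$ is fixed, $p \to 0$ as $H \to \infty$, so the averaged safety probability converges to $1 - \alpha$, which is the claimed statement.

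The step I expect to be the main obstacle is justifying that Lemma~\ref{lemma: probability guarantee for ACP} applies to the scores actually available to the controller, namely the time-lagged surrogates $E^{\tau}_{B_k} = |\hat{B}_k - B^{\tau}_{k-\tau}|$ of Fig.~\ref{Fig: ACP} rather than the unavailable future scores $E_{B^{\tau}_k}$: one must argue that feeding these lagged scores into the ACP update~\eqref{eq: ACP update} still produces a quantile sequence satisfying the two-sided average-coverage estimate, and that the averaging index in Lemma~\ref{lemma: probability guarantee for ACP} lines up with the per-step safety events $\{h(\hat{x}_{k+\tau+1}) \geqslant 0\}$ along the MPC trajectory. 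The Lipschitz coupling $E_{B_k} \leqslant \mathcal{L} E_{x_k}$ from Proposition~\ref{prop: exist of CBFs} is what makes ACP (rather than vanilla CP) the right tool here, since it transfers the distribution-shift behaviour of the state errors to the barrier-certificate scores; once the coverage estimate is in place, the remainder of the proof is just the set-inclusion argument above together with the inheritance of Theorem~\ref{them: ACP-SBC}.
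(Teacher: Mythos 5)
The paper does not actually prove this proposition --- it defers entirely to the external reference \cite{dixit2023adaptive} --- so your proposal supplies an argument where the paper supplies only a citation. The chain you build is the one the surrounding text implies: the MPC enforces $B_k - E^{(r)}_{B_k}\geqslant 0$ deterministically, the ACP coverage event $\{E_{B_k}\leqslant E^{(r)}_{B_k}\}$ then forces $\hat{B}_k\geqslant 0$ exactly as in the proof of Theorem~\ref{them: ACP-SBC}, the deterministic implication inside Lemma~\ref{lemma: P_CBFs to P_h} converts that into $h(\hat{x}_{k+1})\geqslant 0$, and averaging over $k=1,\dots,H$ lets you invoke the lower bound of Lemma~\ref{lemma: probability guarantee for ACP} with $\alpha_0=\alpha$ so that $p_1=(\alpha+\delta)/(H\delta)=p$. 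This is correct modulo the assumptions the paper itself makes (in particular, Lemma~\ref{lemma: P_CBFs to P_h}'s implication $u_k\in\mathcal{B}(\hat{x}_k)\Rightarrow \hat{x}_{k+1}\in\mathcal{H}$ is stated for the noiseless one-step transition, and your argument inherits that caveat rather than introducing a new one). The one substantive gap you correctly identify --- that Lemma~\ref{lemma: probability guarantee for ACP} is stated for the true nonconformity scores while the algorithm can only update on the time-lagged surrogates $E^{\tau}_{B_k}=|\hat{B}_k - B^{\tau}_{k-\tau}|$, so one must check the coverage estimate survives this substitution and that the averaging index aligns with the per-step safety events --- is a gap in the paper as well, which it also resolves only by pointing to \cite{dixit2023adaptive}. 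In short, your reconstruction is faithful to the intended argument and more explicit than what the paper provides.
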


 \subsection{Algorithm Analysis}
\begin{algorithm}[t]
    \caption{ACP-SBC}\label{alg: ACP-SBC}
    \KwIn{Parameters: ACP failure probability $\alpha$ and learning rate $\delta$, prediction horizon $H$, $\gamma$ in CBFs, total time steps $tn$, time step $ts$, initial safe state $x_0$ and goal state $x_g$}
    \KwOut{Safe path of the robot }
    \BlankLine
    
    Initialization: $x_0$\;
    \While{$k \leqslant tn$}{
    \ForEach{$\tau < H$}{
            $B_k^{\tau} \gets  L_fh(x_k^{\tau})+L_gh(x_k^{\tau})u_k^{\tau} + \gamma h(x_k^{\tau})$ \;
            $E_{B_k^\tau} \gets |\hat{B}_k - B_{k-\tau}^{\tau} |$ \;
            $\mathfrak{E} \gets E_{B_k^\tau}$ \;
            $E_{B_k^{\tau}}^{(r)} \gets (\lceil k(1-\alpha) \rceil)^{\mathrm{th}}$ of $\mathfrak{E}$\;
            $\hat{B}_k^{\tau} \gets B_k^\tau -  E_{B_k^{\tau}}^{(r)}$ \;
            \eIf{ $E_{B_k^\tau} \leqslant E_{B_k^{\tau}}^{(r)} $ } {
            $\alpha_{k+1}^{\tau} = \alpha_{k}^\tau + \delta \alpha$ \;
            }{
            $\alpha_{k+1}^\tau = \alpha_{k}^\tau + \delta (\alpha -1)$ \;
            }
            }
            $u_k \gets \mathrm{MPC}(x_k^{\tau}, u_k^{\tau}, \hat{B}_k^{\tau}) \: \tau = 0,\dots, H$ \;
            $x_{k+1} \gets f(x_k) + g(x_k)u_k + \epsilon_k$ \;
    }
\end{algorithm}
The adaptive conformal prediction for safety barrier certificates named ACP-SBC is shown in Algorithm \ref{alg: ACP-SBC}. The ACP is added to quantify the uncertainty of the chance constraint within MPC and the quantified constraint is then embedded in the optimization Eq.(\ref{eq: MPC-CBFs-prob}b). The existence of the chance constraint is proved in Theorem \ref{them: ACP-SBC} and Proposition \ref{prop: cp prob guarantee under MPC} illustrates the probabilistic guarantee of this framework.

\subsection{Case for Multi-Robot Collision Avoidance}
In this subsection, we formulate a specific form of the CBFs under the safety constraint in the robot-robot case and the robot-obstacle case in order to validate ACP-SBC's performance in the simulation experiments.

\textbf{Robot-Robot Case:} the safe constraint between two robots can be defined below,
\begin{equation}
\label{eq: safety function}
    h_{i,j}({x_k}) = || {x}^{i}_k - {x}^{j}_k ||^2 - (R^i + R^j)^2
\end{equation}
where $R^i,R^j$ represent the radius of the robots $i,j$, and ${x}^{i}_k$ and ${x}^{j}_k$ are the positions of the robot $i, j$ respectively. Then, according to 
Eq.(\ref{eq: def hat_B}), the corresponding constraint becomes,
\begin{small}
    \begin{equation}
\label{eq: CBFs-multi-robot}
    2({x}^{i}_k - {x}^{j}_k) (\triangle f^{ij}_k +\triangle g^{i,j}_k u^{ij}_k) + \gamma h({x^{ij}_k}) -\mathcal{L} E^{(r)}_{B_k} \geqslant 0
\end{equation}
\end{small}

where $\triangle f^{ij}_k = f({x}^{i}_k) - f({x}^{j}_k)$, $\triangle g^{i,j}_k u^{ij}_k) = g(x^i_k) u^i_k - g(x^j_k) u^j_k$. 

\textbf{Robot-Obstacle Case:} the safe constraint between the robot and the obstacle can be defined as the following similar to the robot-robot constraints,
\begin{equation}
    h_{i, \mathrm{o}^j}({x_k}) = \left \| {x}^{i}_k - x^{j}_\mathrm{o} \right \|^2 - (R^i+R^j_\mathrm{o})^2
\end{equation}
where $x^{j}_\mathrm{o}$ represents the position of the obstacle $j$, which is regarded as a circle with the radius $R^j_\mathrm{o}$. Then we have,
\begin{equation}
    2({x}^{i}_k - x^{j}_{\mathrm{o}}) (f_i({x}^{i}_k) +g(x^{i}_k)u^{\mathrm{o}^j, i}_k) + \gamma h({x^{i}_k}) - \mathcal{L} E^{(r)}_{B_k} \geqslant 0
\end{equation}

\textbf{Multi-Robot with Multi-Obstacle Case.}
It is noted that the control policy of the multi-robot with the obstacle can be acquired by the union set between the multi-robot safe policy and the robot-obstacle policy, which can be expressed as $u^{s}_k=u^{i,j}_k \bigcap u^{i, \mathrm{o}}_k, \forall j \neq i$. 

\section{Simulation}\label{Sec: simulation}
The numerical simulations are conducted in this section to validate the effectiveness of our proposed algorithm. We first examine the impact of the specific parameters in the Algorithm \ref{alg: ACP-SBC} on the performance using a robot modeled by single integrator dynamics. Then, the simulation performance of the proposed ACP-SBC under variant noise distributions is discussed. Finally, the discussion is extended to the multi-robot collision avoidance task where behaviors of robots with both single integrator dynamics and unicycle dynamics are investigated.

\subsection{Simulation Setup}
To validate the performance of Algorithm \ref{alg: ACP-SBC}, the stochastic dynamics of the single-integrator model is presented below: 
\begin{equation}\label{eq: integrator dynamics}
    \dot{x}=u+\epsilon
\end{equation}
where $\epsilon$ is the motion uncertainty.

For the numerical simulations involved in the parameters analysis and performance comparison under variant noise distributions, 
the control input is limited by $-1 \leqslant u \leqslant 1$ and the stepsize is $ts=0.05$.

To validate the performance of Algorithm~\ref{alg: ACP-SBC}, the time horizon $H$ of MPC, $\gamma$ in the deterministic control constraint, and the failure probability $\alpha$ in the ACP, will be analyzed to guide the following simulation. Gaussian noise $\epsilon \sim \mathcal{N}(0, 1)$ is applied to the numerical simulation of the stochastic dynamics in Eq.(\ref{eq: integrator dynamics}) and the robot radius is $R=0$ for visualization convenience. 

\begin{figure*}[htbp]	
	\centering
	\subfigure[Horizon $H$]
	{
		\begin{minipage}[t]{0.21\linewidth}
			\centering
			\includegraphics[scale=1, width=0.93\linewidth]{./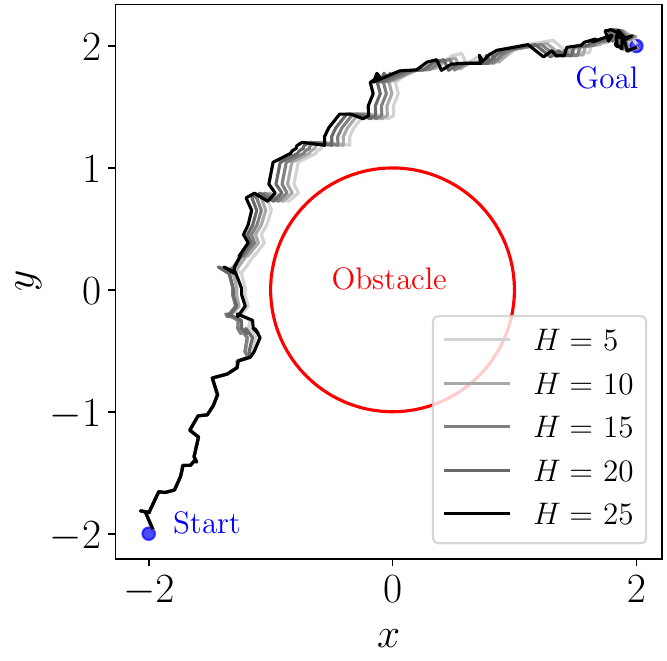}
		\end{minipage}	
	}
	\subfigure[$\gamma$ in CBFs]
	{
		\begin{minipage}[t]{0.22\linewidth}
			\centering
			\includegraphics[scale=1, width=\linewidth]{./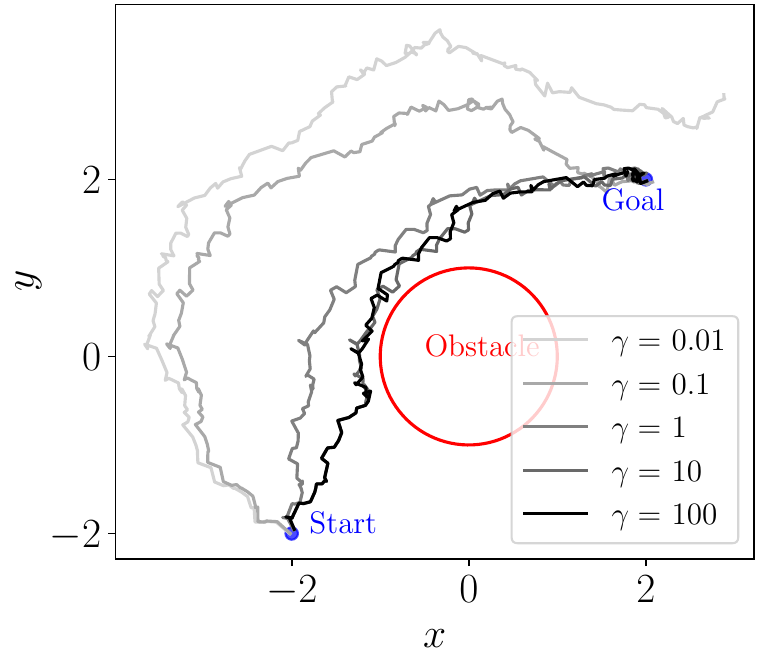}
		\end{minipage}	
	}
	\subfigure[$\alpha$ in ACP]
	{
		\begin{minipage}[t]{0.21\linewidth}
			\centering
			\includegraphics[scale=1, width=0.93\linewidth]{./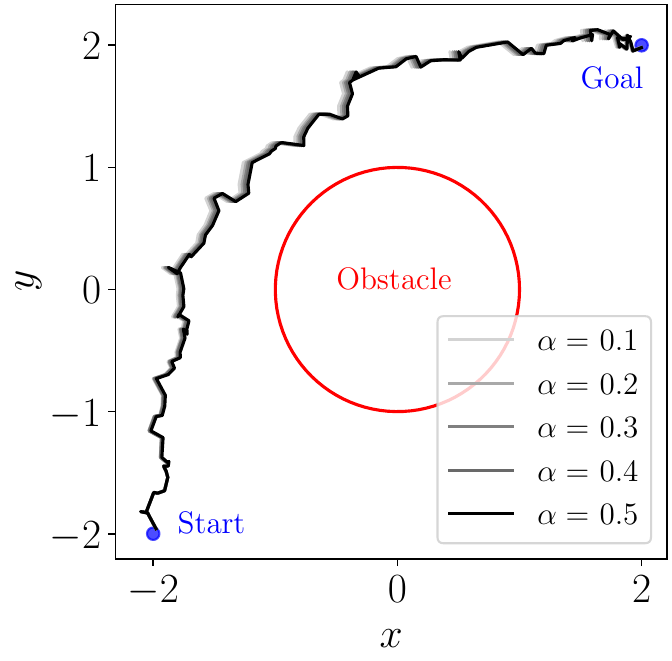}
		\end{minipage}	
	}
    \subfigure[$\hat{B}$ estimated by ACP-SBC]
	{
		\begin{minipage}[t]{0.23\linewidth}
			\centering
			\includegraphics[scale=1, width=\linewidth]{./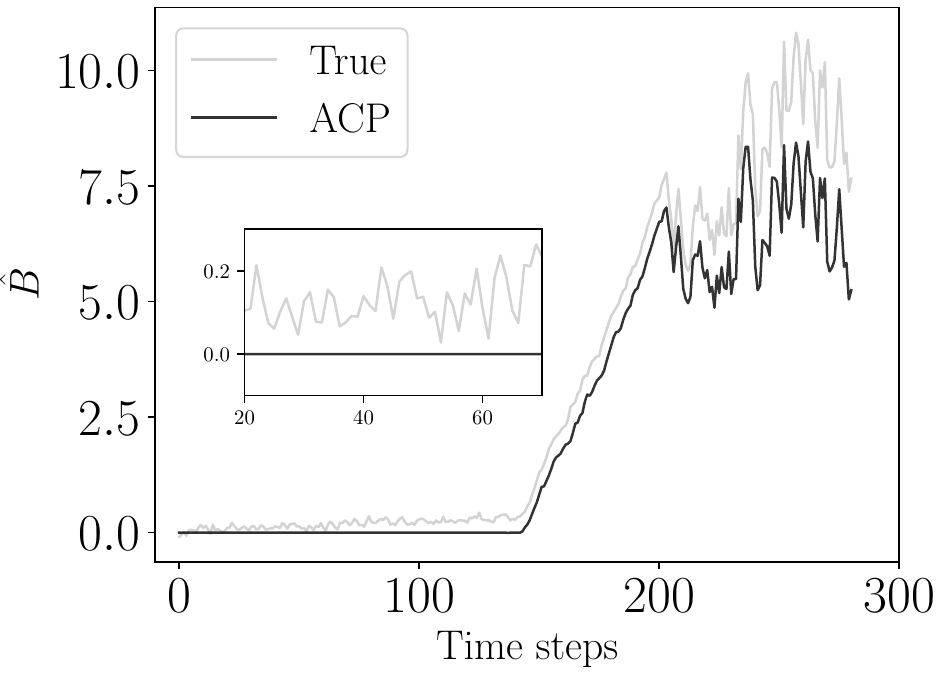}
		\end{minipage}	
	}
	\caption{The effect of the parameter in our method (ACP-SBC).}
	\label{fig: parameter}
\end{figure*}

\textbf{Parameter Analysis.} The numerical simulation result as shown in Fig.\ref{fig: parameter} for the parameter analysis illustrates that: the robot is more conservative (far away from the obstacle) when $H$ increases, $\gamma$ decreases, and $\alpha$ decreases. The result indicates that the big horizon $H$ shows more potential to avoid the obstacle because of the longer time horizon considered by the robot to get feedback from the environment. Similar to $H$, $\alpha$ decreases so the belief interval of the robot state enlarges, leading to the preservation of the robot. In the following simulation, $H=8$, $\alpha=0.05$ and $\gamma=1$ will be adopted without extra specification.

\textbf{ACP Analysis.} 
To verify the efficacy of the proposed algorithm, 
a comparative analysis between the ground truth and the $\hat{B}$ estimated by ACP-SBC is conducted using the same simulation environment designated for parameter analysis. The findings, illustrated in Fig.\ref{fig: parameter}(d), suggest that the ACP-SBC is capable of accurately estimating $\hat{B}$ with a confidence level of 95\% ($\alpha=0.05$).

\subsection{Distribution Comparision}\label{Sec: simulation distribution}
\begin{figure*}[t]
	\centering
	\subfigure[Noise from normal distribution]
	{
		\centering
		\includegraphics[scale=1, width=0.30\linewidth]{./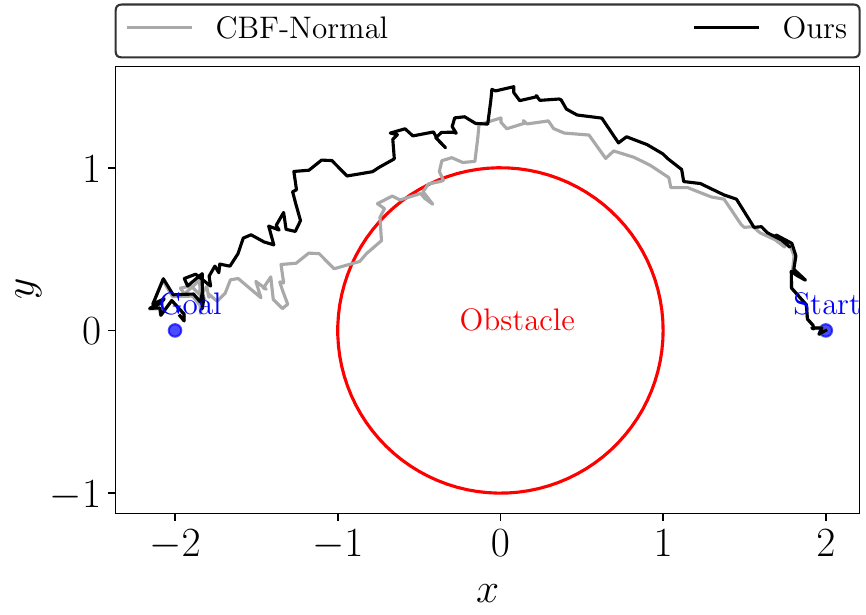}
	}
	\subfigure[Noise from unifrom distribution]
	{
			\centering
			\includegraphics[scale=1, width=0.31\linewidth]{./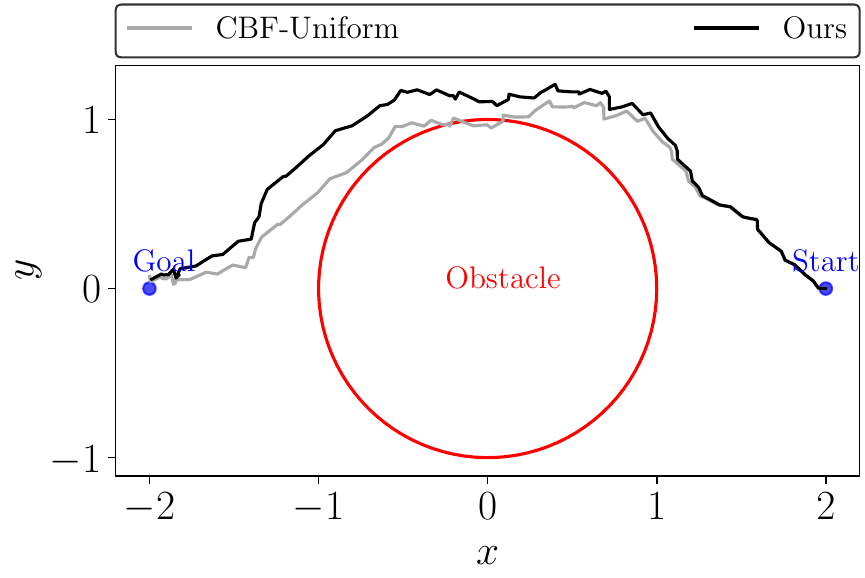}
	}
   \subfigure[Noise from mixture distribution]
	{
			\centering
			\includegraphics[scale=1, width=0.27\linewidth]{./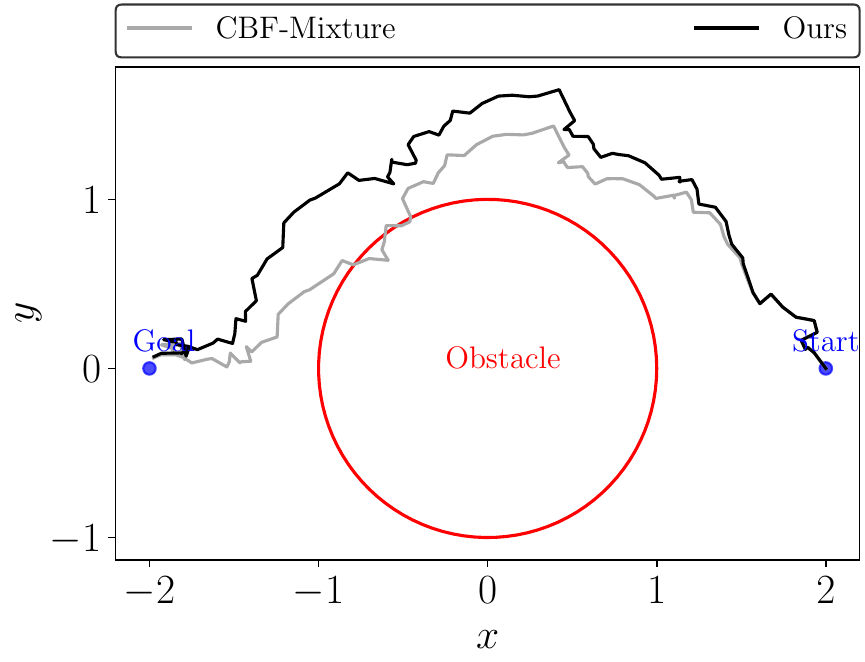}
	}
	\caption{Distribution-free validation. Normal (Uniform) means that the Gaussian distribution (uniform distribution) is added to the CBFs-based method or our method (ACP-SBC) while mixture means that one of them is randomly added to the stochastic system at each time step.}
	\label{Fig: distribution}
\end{figure*}

To validate the effectiveness of the proposed method, the CBFs-based method and ACP-SBC are applied to the single integrator simulation rendered by the Eq.(\ref{eq: integrator dynamics}). Specifically, we consider the noise following the Gaussian distribution ($\epsilon \sim \mathcal{N}(0,1)$), the uniform distribution ($\epsilon \sim U(-1, 1)$), and a random combination of the two distributions. To present the numerical simulation result clearly, the robot radius $R=0$ is also applied, which can be convenient for checking the collision with the obstacle. 

The simulation result in Fig.\ref{Fig: distribution} shows that the path generated from ACP-SBC is collision-free under different distribution inputs while all the paths collide with the obstacle using the CBFs-based method. Simulation results reveal that ACP-SBC can be applied to quantify different distributions and even the time-varying distribution.

\subsection{Multi-Robot Collision Avoidance with Obstacle}
\textbf{Multi-Robot Case}. 
We continue to verify ACP-SBC in a multi-robot scenario. First, the simulation in an obstacle-free scenario with the introduction of 30 robots modeled by single integrator dynamics is conducted. All the parameters are the same as aforementioned except the robot radius ($R=0.075$).

To illustrate the results, the minimum distance is calculated at each time step, both among robots and between a robot and an obstacle. The minimum distance below the reference distance means that collision happens. The reference distance is set to 0 because it represents the difference between the distance of two robots and the safety distance.

The simulation results in Fig.\ref{Fig: integrator simulation obs-free}(a)-(c) present that the robot still collides using CBFs, whereas ACP-SBC guarantees that the robot avoids collisions in scenarios with up to 30 robots. The simulation involved scaling the number of robots and was repeated 20 times with 20 different random seeds. The simulation result in Fig.\ref{Fig: integrator simulation obs-free}(d) validates that ACP-SBC is effective when scaling up the number of robots.

\textbf{Multi-Robot Collision Avoidance with Obstacle}. However, all the simulations so far are based on robots with single integrator dynamics. To validate the generalization of ACP-SBC, an additional experiment based on robots with unicycle dynamics is conducted on the CoppeliaSim \cite{6696520}. 

The robot's radius is $R=0.075m$ and the radius of the obstacle is $R_{\mathrm{o}} = 0.2m$. While the control input limit, $-0.08 m/s \leqslant u \leqslant 0.08 m/s$,  will be adopted in the unicycle simulation. The noise $\epsilon_{v} \sim 0.01\times\mathcal{N}(0,1)$ is applied to the linear velocity of the unicycle robot, while $\epsilon_{\omega} \sim 0.001\times\mathcal{N}(0,1)$ is allpied to the angular velocity of the unicycle robot. $H=5$ is employed and step size in the simulation platform is $ts=0.05$. $\gamma=1000$ is applied because big $\gamma$ reveals the less conservative motion, which has more potential to collide. Furthermore, ACP-SBC can still work on the small $\gamma$ through the previous simulation result.

The simulation results using robots with unicycle dynamics under three obstacles are shown in Fig.\ref{Fig: unicycle simulation obs}. The robot using the CBFs-based method collides under Gaussian distribution input, whereas the ACP-SBC method remains collision-free. The simulation result validates that our ACP-SBC can also be applied to the unicycle robot.

\begin{figure*}[!t]	
	\centering
	\subfigure[CBFs]
	{
		\begin{minipage}[t]{0.23\linewidth}
			\centering
			\includegraphics[scale=1, width=\linewidth]{./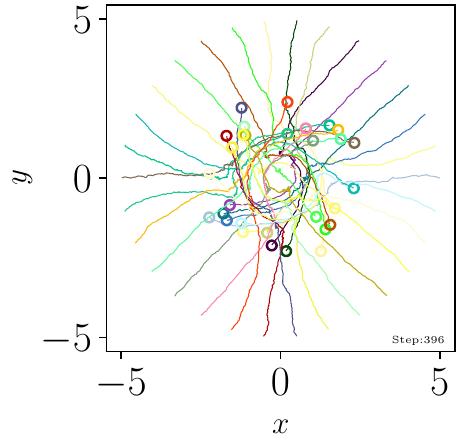}
		\end{minipage}	
	}
	\subfigure[Our method: ACP-SBC]
	{
		\begin{minipage}[t]{0.23\linewidth}
			\centering
			\includegraphics[scale=1, width=0.97\linewidth]{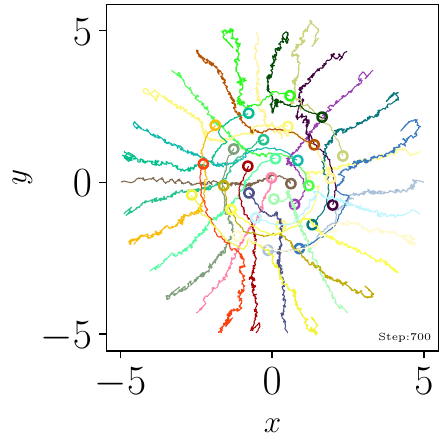}
		\end{minipage}	
	}
    \subfigure[Minimum distance]
	{
		\begin{minipage}[t]{0.23\linewidth}
			\centering
			\includegraphics[scale=1, width=\linewidth]{./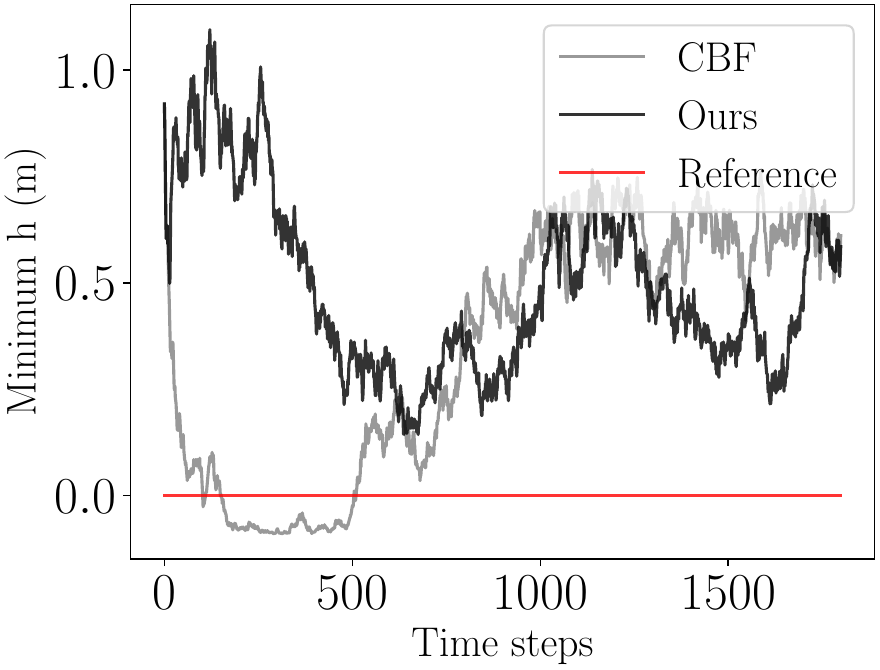}
		\end{minipage}	
	}
    \subfigure[Minimum distance of robots under 20 experiments]
	{
		\begin{minipage}[t]{0.23\linewidth}
			\centering
			\includegraphics[scale=1, width=\linewidth]{./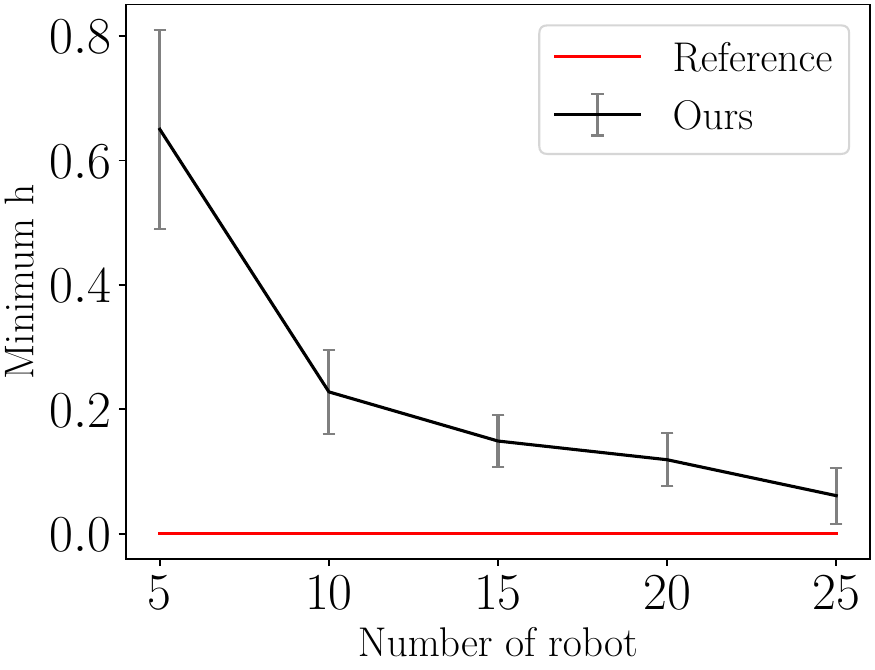}
		\end{minipage}	
	}
 
	\caption{Simulation for multi-robot coordination using 30 robots with single integrator dynamics in an obstacle-free environment.} 
	\label{Fig: integrator simulation obs-free}
\end{figure*}

\begin{figure*}[ht]	
	\centering
	\subfigure[CBFs]
	{
		\begin{minipage}[t]{0.31\linewidth} \label{fig:CBFs}
			\centering
			\includegraphics[scale=1, width=0.9\linewidth]{./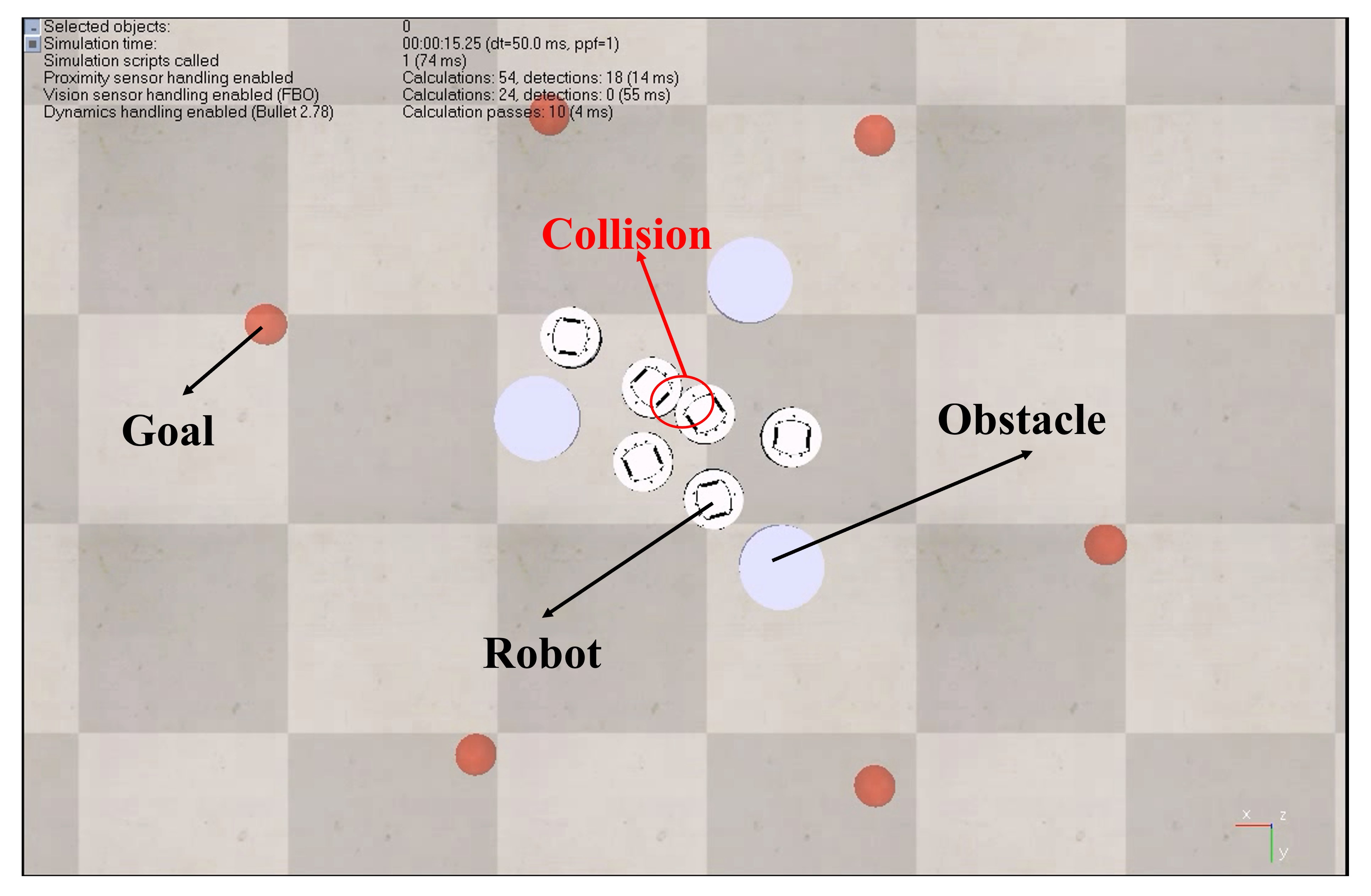}
		\end{minipage}	
	}
	\subfigure[Our method: ACP-SBC]
	{
		\begin{minipage}[t]{0.31\linewidth} \label{fig:acp-sbc}
			\centering
			\includegraphics[scale=1, width=0.9\linewidth]{./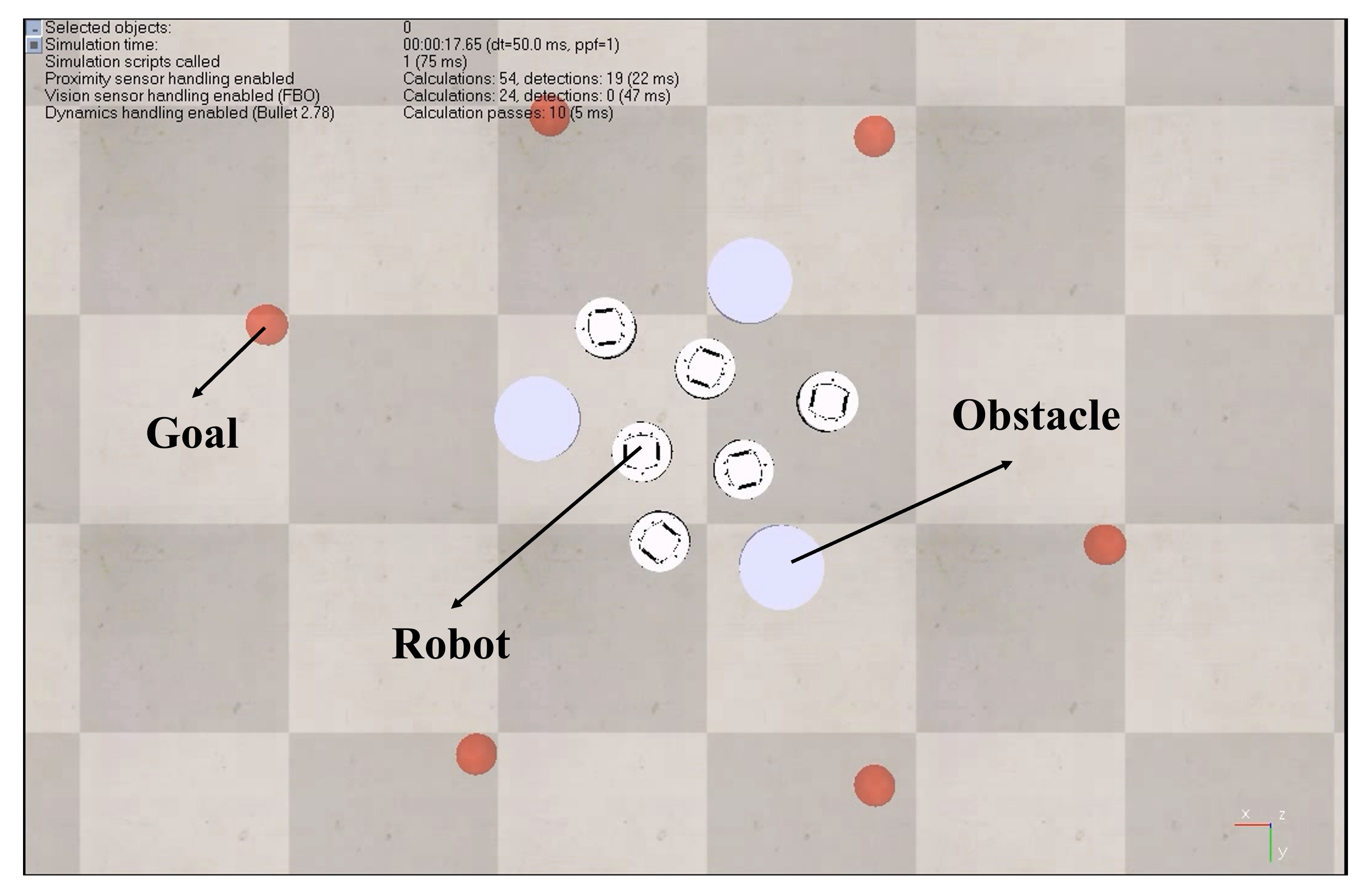}
		\end{minipage}	
	}
    \subfigure[Minimum distance]
	{
		\begin{minipage}[t]{0.31\linewidth} \label{fig:minimum distance}
			\centering
			\includegraphics[scale=1, width=0.8\linewidth]{./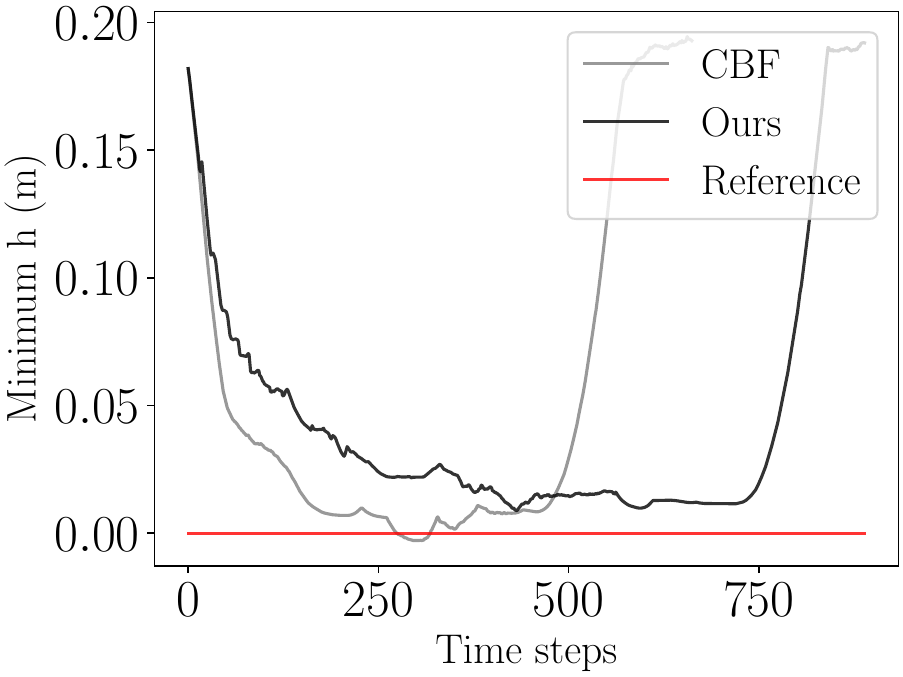}
		\end{minipage}	
	}
	\caption{Simulation with 6 Khepera IV robots with unicycle dynamics for multi-robot coordination in CoppeliaSim with Fig.\ref{fig:CBFs} using CBFs-based method and Fig.\ref{fig:acp-sbc} using ACP-SBC. It is obvious that collision happens using the CBFs-based method as shown in Fig.\ref{fig:CBFs} and the minimum distance shown in Fig.\ref{fig:minimum distance} also reveals that the collision, as the plot intersects the reference line, indicating that $h<0$}
	\label{Fig: unicycle simulation obs}
\end{figure*}

\section{Conclusion}\label{Sec: conclusion}
This paper addresses the problem of uncertainty-aware safety-critical control under stochastic system dynamics with unknown motion noise. A new framework, ACP-SBC, is proposed to quantify the uncertainty without assuming specific forms of the distribution of the motion noise or the prediction model. 
Theoretical analysis is provided to justify the performance of our proposed method. 
Simulation results validate the effectiveness of the proposed ACP-SBC framework.

\bibliographystyle{IEEEtran}
\bibliography{IEEEabrv,ref}

\end{document}